\documentclass[runningheads, orivec]{llncs}
\usepackage[T1]{fontenc}
\usepackage{graphicx,verbatim}
\usepackage{subcaption}
\usepackage{booktabs}
\usepackage{makecell}
\usepackage{amsfonts}
\usepackage{amsmath}
\usepackage{multirow}
\usepackage[table]{xcolor}
\usepackage[x11names,svgnames,dvipsnames]{xcolor}

\usepackage{amssymb}
\usepackage{nicefrac}

\usepackage{hyperref}

\definecolor{myred}{RGB}{255,150,150} 
\definecolor{mygreen}{RGB}{150,255,150} 
\definecolor{myyellow}{RGB}{255,255,0}

\usepackage{color}
\usepackage[style=nature, natbib=true, sorting=nyt, maxbibnames=1, maxcitenames=2, uniquelist=false, backend=biber]{biblatex}
\DeclareFieldFormat{title}{#1}
\DeclareFieldFormat{booktitle}{#1}
\DeclareFieldFormat{journaltitle}{#1}
\renewbibmacro*{in:}{%
  \printtext{%
    In\intitlepunct}}
\AtBeginBibliography{\small}
\begin{document}
\title{Fairness Beyond Demographics: Optimizing Performance Across Appearance‑Based Hidden Cohorts in Medical Imaging}
%Learning to Mitigate Hidden Cohort Bias in Medical Imaging}
\titlerunning{Fairness Beyond Demographics}
% If the paper title is too long for the running head, you can set
% an abbreviated paper title here
%
\begin{comment}  %% Removed for anonymized MICCAI submission
\author{First Author\inst{1}\orcidID{0000-1111-2222-3333} \and
Second Author\inst{2,3}\orcidID{1111-2222-3333-4444} \and
Third Author\inst{3}\orcidID{2222--3333-4444-5555}}
%
\authorrunning{F. Author et al.}
% First names are abbreviated in the running head.
% If there are more than two authors, 'et al.' is used.
%
\institute{Princeton University, Princeton NJ 08544, USA \and
Springer Heidelberg, Tiergartenstr. 17, 69121 Heidelberg, Germany
\email{lncs@springer.com}\\
\url{http://www.springer.com/gp/computer-science/lncs} \and
ABC Institute, Rupert-Karls-University Heidelberg, Heidelberg, Germany\\
\email{\{abc,lncs\}@uni-heidelberg.de}}

\end{comment}

\author{Milad Masroor \and Cuong Nguyen \and Kevin Wells \and Gustavo Carneiro}  %% Added for anonymized MICCAI submission
\authorrunning{Masroor et al.}
\institute{Centre for Vision, Speech and Signal Processing, University of Surrey, Guildford, UK \\
    \email{\{m.masroor,c.nguyen,k.wells,g.carneiro\}@surrey.ac.uk}}
  
\maketitle              % typeset the header of the contribution
\begin{abstract}

Medical image analysis models can exhibit performance disparities across patient subgroups, threatening clinical safety and fairness.  Existing methods typically address this issue by optimizing accuracy and fairness metrics for visible demographic attributes (e.g., sex or age) considered in isolation.
This strategy not only overlooks potentially more informative latent stratifications, which may reveal deeper sources of model error and inequity, but also fails to scale when multiple demographic attributes are considered simultaneously due to the resulting sparsity of training data within each subgroup.
We deal with these issues by introducing the label-free hidden-cohort fairness (LHCF) training paradigm that instead of maximizing fairness over visible demographic attributes, it optimizes fairness across latent subpopulations discovered  from image appearance. By clustering images into $K$ appearance‑based cohorts and applying fairness optimization over them, LHCF uncovers underlying sources of model error and avoids the combinatorial sparsity of multi‑demographic attributes, reducing disparities across both single and multiple demographic attributes.
We demonstrate on our proposed fairness benchmark, HIDFairBench, that LHCF provides state-of-the-art fairness results on single and multiple demographic attributes, despite never using demographic labels for training. Our results position hidden‑cohort fairness as a practical, scalable, and robust alternative to demographic‑based fairness optimization for trustworthy medical image analysis. Code will be available upon paper acceptance.

\keywords{Hidden Cohort Fairness  \and Demographic-Label-Free Fairness \and Trustworthy AI in Healthcare.}
% Authors must provide keywords and are not allowed to remove this Keyword section.

\end{abstract}

\section{Introduction}
\label{sec:intro}

Medical image analysis (MIA) systems often exhibit performance disparities across visible demographic attributes (e.g., sex, age, ethnicity)~\cite{seyyed2021underdiagnosis,christodoulou2024confidence,koch2024distribution,oakden2020hidden,finlayson2021clinician,bissoto2025subgroup}, raising concerns about clinical safety and fairness~\cite{seyyed2021underdiagnosis,christodoulou2024confidence}. Although these attributes are assumed to be the main drivers of such gaps, recent studies have shown that they may align only weakly with latent factors that may better explain model behavior~\cite{oakden2020hidden,bissoto2025subgroup,seo2022unsupervised}. Indeed, performance disparities can be much larger between hidden patient cohorts (formed by visually coherent but unlabeled subsets of the data) than across  demographic groups, creating  low‑performing subpopulations often referred to as underperforming slices or blind spots~\cite{wynants2020prediction,eyuboglu2022domino,plumb2022towards,olesen2024slicing}. This challenge becomes more severe when multiple attributes are considered jointly, as the number of intersectional subgroups expands while available samples per subgroup shrink~\cite{buolamwini2018gender}. These issues reflect the hidden stratification phenomenon, in which coarse labels fail to capture meaningful clinical variation~\cite{oakden2020hidden,zech2018variable}, causing models to perform poorly on underrepresented cohorts~\cite{degrave2021ai,sagawa2019distributionally,rajpurkar2018deep,sohoni2020no} and ultimately increasing the risk of disproportionate patient harm~\cite{hashimoto2018fairness}.

Despite growing evidence that hidden patient cohorts represent a major driver of model failures, fairness methods in medical imaging still focus primarily on visible demographic attributes~\cite{tian2024fairseg,luo2024fairclip,luo2024fairvisionequitabledeeplearning,tian2024fairdomain,masroor2024fair}, while work on hidden cohorts has mostly diagnosed, instead of mitigating, the reported disparities~\cite{bissoto2025subgroup,olesen2024slicing,collevati2024leveraging,chakraborty2024exmap}. Demographic‑focused fairness approaches also scale poorly as intersectional groups increase and per‑group sample sizes collapse~\cite{buolamwini2018gender}. 
%These observations motivate shifting fairness optimization toward a small number of latent cohorts with sufficient data. 
Taken together, these limitations point to latent cohorts, with sufficient data and richer structure, as a more reliable basis for fairness optimization than visible demographic groups. To illustrate this paradigm shift, we compare two approaches: (i) classic fairness training that optimizes across visible demographic attributes; and (ii) label-free hidden-cohort fairness (LHCF), which optimizes fairness across clusters formed with appearance-based embeddings. Fig.~\ref{fig:motivation_visible_complexity} shows that LHCF yields the best classification and fairness results not only for isolated demographic groups but also for multi‑attribute intersections.

In this paper, we propose LHCF (Fig.~\ref{fig:method_overview}), which optimizes fairness over appearance‑based latent cohorts without demographic labels and is compatible with existing fairness approaches.
We also propose the HIdden‑Demographic Fairness Benchmark (HIDFairBench), a new benchmark to stress-test fairness methods.
This paper has the following contributions: 
\begin{itemize}
    \item The new LHCF training that clusters images into $K$ hidden clusters and applies fairness optimization to balance accuracy and fairness across these clusters. This approach avoids the combinatorial sparsity of multi‑attribute demographic groups and reduces performance disparities not only for the hidden clusters but also across single and multiple demographic attributes.
    %we show that optimizing fairness over hidden cohorts leads to improved fairness on both demographic and multi‑demographic groups, despite not using demographic labels during training;
    %\item The novel label‑independent fairness training algorithm that avoids reliance on demographic metadata and remains stable under increasing cohort complexity;
    \item %The new HIDFairBench benchmark that analyses the quality of hidden cohorts found by LHCF and tests the performance of fairness methods under increasingly complex multi‑demographic partitions, revealing the scalability limits of demographic‑based approaches.
    The new HIDFairBench, a benchmark that evaluates hidden-cohort quality and tests fairness methods under increasingly complex multi‑demographic partitions, exposing the scalability limits of fairness approaches.
\end{itemize}
Our experiments show that LHCF achieves state‑of‑the‑art accuracy and fairness across single and multiple demographic attributes from HIDFairBench, despite never using demographic labels for training, establishing LHCF as a scalable and robust alternative to demographic‑based fairness optimization in MIA.

\begin{figure*}[t!]
    \centering
    
    % -------------------- (a) --------------------
    \begin{subfigure}{\textwidth}
        \centering
        \includegraphics[width=0.8\textwidth]{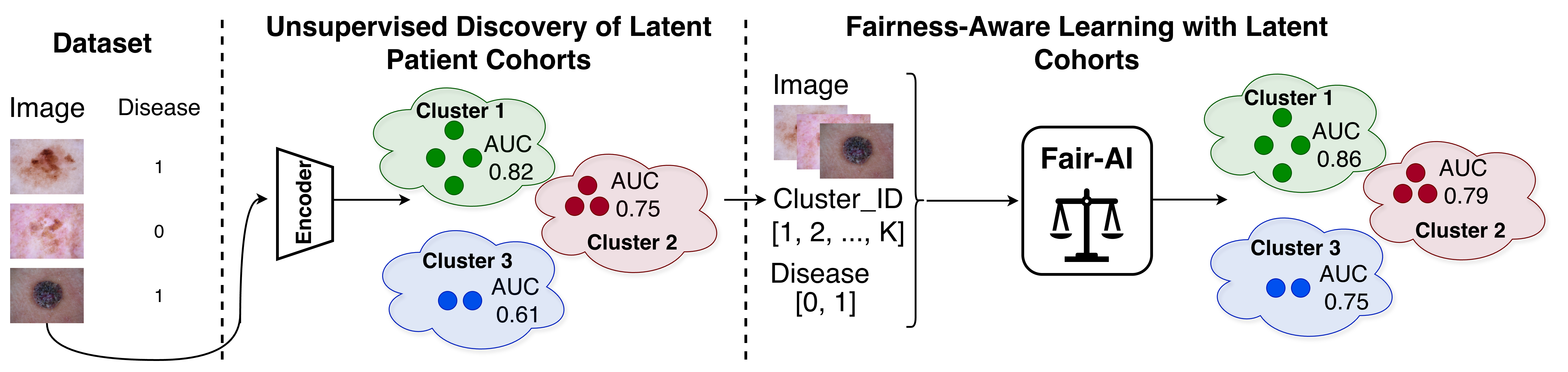}
        \caption{LHCF training.}
        \label{fig:method_overview}
    \end{subfigure}
    
    \vspace{0.5cm}
    
    % -------------------- (b) --------------------
    \begin{subfigure}{\textwidth}
        \centering
        \includegraphics[width=0.9\textwidth]{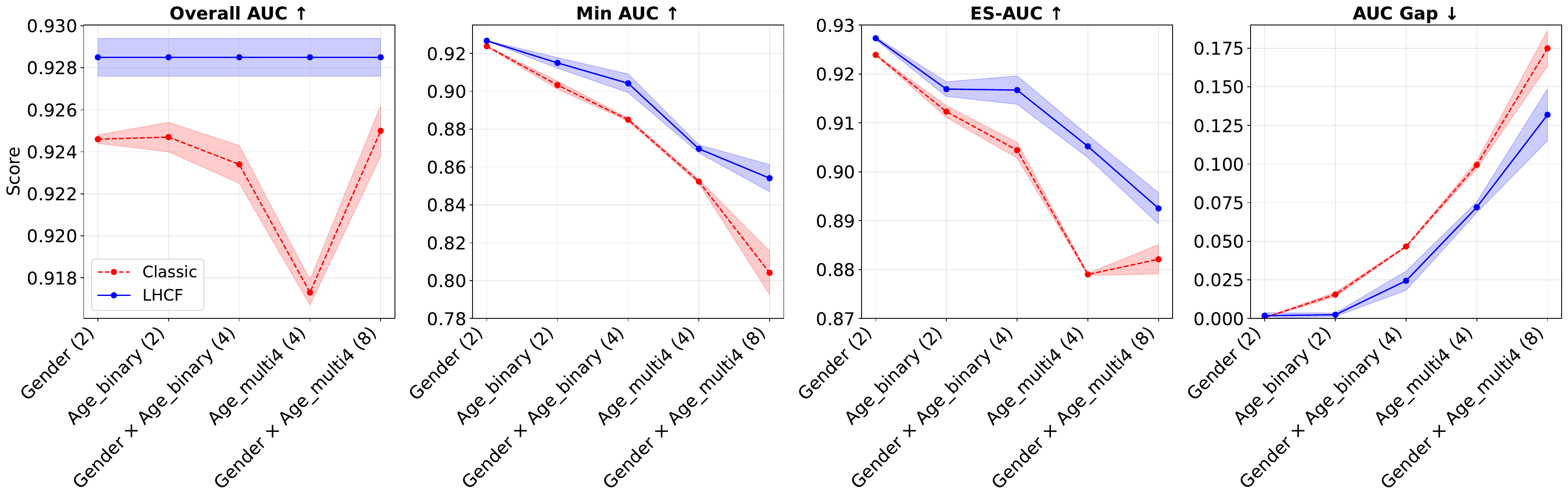}
        \caption{Classic vs. LHCF Results.}
        \label{fig:motivation_visible_complexity}
    \end{subfigure}
    
    \caption{
    (a) LHCF training. 
    (b) Overall AUC (left) and fairness metrics (Min AUC, ES-AUC, AUC Gap; second-to-fourth plots) across visible cohort complexity for Classic demographic fairness training (i) and LHCF training (ii) of FairDi~\cite{masroor2024fair} (using ResNet18 backbone) on HAM10000~\cite{tschandl2018ham10000}. Visible-cohort complexity increases from Gender (2 groups) to Age\_binary (2 groups), Age\_multi4 (4 groups), and intersectional cohorts Gender$\times$Age\_binary (4 groups) and Gender$\times$Age\_multi4 (8 groups) -- Sec.~\ref{sec:benchmark} explains this experimental setup.
    %Classic degrades as complexity grows, whereas LHCF maintains superior performance.
    }
    
\end{figure*}

\section{Method}
\label{sec:method}

In this section, we present the LHCF training and the HIDFairBench benchmark.

\subsection{LHCF Training}
\label{sec:LHCF_Training}

The \textbf{LHCF training has two steps} (Fig.~\ref{fig:method_overview}): \textbf{(i)} unsupervised discovery of latent cohorts from image embeddings, and \textbf{(ii)} fairness‑aware training, where the discovered cohort identities are treated as sensitive attributes. 

\vspace{1ex}

\noindent
\textbf{Step (i)} assumes the availability of a labeled dataset  $\mathcal{D}=\{(x_i,y_i)\}_{i=1}^{N}$, containing medical images $x \in \mathcal{X}$ and diagnostic labels $y \in \mathcal{Y} \subseteq \Delta^{|\mathcal{Y}|-1}$, but without demographic annotations.
Our goal is to learn the model 
$f_{\theta} = d_{\theta_d} \circ e_{\theta_e}$, where \(\theta = (\theta_e, \theta_d)\), 
$e_{\theta_e} : \mathcal{X} \to \mathcal{Z}$ represents the encoder, and
$d_{\theta_d} : \mathcal{Z} \to \mathcal{Y}$ denotes the decoder. We assume that the model $f_{\theta}(.)$ has been pre-trained either with self-supervised learning~\cite{jing2020self} or with supervised learning using $\mathcal{D}$.
In this Step (i), the method first finds the hidden cohorts via Gaussian Mixture Model (GMM)~\cite{dempster1977maximum}, where the number of clusters is estimated with the Bayesian Information Criterion (BIC)~\cite{schwarz1978estimating}.
This clustering happens with the image embeddings in $\mathcal{Z} = \{ z_i | z_i=e_{\theta_e}(x_i), (x_i,y_i) \in \mathcal{D}\}$.
For $K \in \mathbb{N}$, a Gaussian mixture is defined by 
$p(z\mid\Phi_K)=\sum_{k=1}^K \pi_k\,\mathcal{N}(z\mid \mu_k,\Sigma_k), 
\quad \Phi_K=\{\pi_k,\mu_k,\Sigma_k\}_{k=1}^K$,
with log-likelihood $\ell(\Phi_K;\mathcal{Z})=\sum_{i=1}^N \log \sum_{k=1}^K \pi_k \mathcal{N}(z_i\mid\mu_k,\Sigma_k)$.
We estimate $\Phi^*_K$ by Expectation Maximization (EM)~\cite{dempster1977maximum}, yielding the responsibility $\gamma_{ik}=p(k\mid z_i,\Phi^*_K)=
\frac{\pi_k \mathcal{N}(z_i\mid \mu_k,\Sigma_k)}{\sum_{j=1}^K \pi_j \mathcal{N}(z_i\mid \mu_j,\Sigma_j)}$. The number of cohorts can be estimated with
\begin{equation}
    \textstyle K^*\in \operatorname*{argmin}_{K\in\{1,\dots,N\}} \mathrm{BIC}(K) = \operatorname*{argmin}_{K\in\{1,\dots,N\}} p_K \log N - 2\,\ell(\Phi^*_K;Z),
\end{equation}
%\textcolor{red}{could we replace \(K_{min}\) by 1 and \(K_{max}\) by N because there are maximum \(N\) samples?}
where $p_K=(K-1)+Kd+K\,\frac{d(d+1)}{2}$ for full covariances. The hidden cohort dataset to be used in subsequent fairness optimization is then defined by $\widehat{\mathcal{D}} = \{ (x_i,y_i,c_i) | (x_i,y_i) \in \mathcal{D}, c_i=\arg\max_{k\in\{1,\dots,K^*\}} \gamma_{ik} \}$, where $c_i$ represents the hidden cohort label of the sample $(x_i,y_i)$ computed with the responsibility $\gamma_{ik}$.

\vspace{1ex}

\noindent
\textbf{Step (ii)} uses the hidden cohort dataset $\widehat{\mathcal{D}}$ to optimize
\begin{equation}
\label{eq:objective_loss}
    \textstyle \theta^{*} = \operatorname*{argmin}_{\theta}\;\frac{1}{N}\sum_{i=1}^{N}\ell_{\mathrm{clss}}\!\big(y_i,f_{\theta}(x_i)\big)
+\lambda\,\ell_{\mathrm{fair}}\!\big(\widehat{\mathcal{D}},\theta\big),
\end{equation}
where $\lambda\!\ge\!0$ is a hyper-parameter, $\ell_{\mathrm{clss}}(.)$ is a classification loss and $\ell_{\mathrm{fair}}(.)$ denotes a fairness loss that can be defined in different ways, such as:
\begin{equation}
    \textstyle \ell_{\mathrm{fair}}^{\mathrm{wst}}\big(\widehat{\mathcal{D}},\theta\big) = \max_{k} R_k(\theta)
    \quad\text{or}\quad
    \ell_{\mathrm{fair}}^{\mathrm{gap}}\big(\widehat{\mathcal{D}},\theta\big) = \max_{k} R_k(\theta) -\min_{k} R_k(\theta),
\end{equation}
with $R_k(\theta)=\frac{1}{|C_k|} \sum_{(x_{i}, y_{i}) \in C_{k}} \ell_{\mathrm{clss}}\!\big(y_i,f_{\theta}(x_i)\big)$
% ($S_k=\{i:\,c_i=k\}$)
denoting classification loss of samples in the cluster \(C_{k}\), $\ell_{\mathrm{fair}}^{\mathrm{wst}}(\cdot)$ minimizing the worst‑cohort classification loss, and $\ell_{\mathrm{fair}}^{\mathrm{gap}}(\cdot)$ minimizing the gap between the best‑ and worst‑cohort losses. 
LHCF is agnostic to the model and fairness loss, which together define a FairAI method (e.g., SWAD~\cite{cha2021swad}, FIS\cite{luo2024fairvisionequitabledeeplearning}, FEBS~\cite{tian2024fairseg}, FairCLIP~\cite{luo2024fairclip}, FaMI~\cite{tian2024fairdomain}, FairDi~\cite{masroor2024fair}).

\vspace{1ex}

\noindent\textbf{Theoretical analysis}
We show that under mild clustering assumptions, the fairness loss 
$\ell_{\mathrm{fair}}^{\mathrm{wst}}(\widehat{\mathcal{D}},\theta)$ 
upper-bounds the loss of any visible cohort; thus, minimizing it yields fairness across all visible cohorts~\citep{sagawa2019distributionally}. 
Specifically, we assume that the GMM with hard assignments partitions the data into 
$K$ disjoint clusters $\{C_1,\dots,C_K\}$ with $C_k \cap C_{k'} = \varnothing$ for $k \neq k'$, 
and that each visible cohort $G_t$ is a union of these clusters, i.e., $G_t = \bigcup_{k \in \mathcal{K}} C_k$, where 
$\mathcal{K} \subseteq \{1,\dots,K\}$.
%We prove that under certain clustering conditions, the proposed approach, and in particular the fairness loss \(\ell_{\mathrm{fair}}^{\mathrm{wst}}\big(\widehat{\mathcal{D}},\theta\big)\) is an upper-bound of the loss of any visible cohorts. Thus, minimizing this fairness loss will result in a model that is fair across all visible cohorts.

% \noindent
% We assume that the GMM with hard assignment partitions the whole data space into \(K\) separated clusters: \(\{C_{1}, \dots, C_{K}\}\)
% % with these clusters being either non-overlapping or overlapping \(C_{k} \cap C_{k'} = \varnothing \vee C_{k} \cap C_{k'} \neq \varnothing, \forall k \neq k'\).
% with \(C_{k} \cap C_{k'} = \varnothing, \forall k \neq k'\)
% For simplicity, we also assume that each visible cohort \(G_{i}\) is a union of hidden clusters. This means that there exists a set of indices \(\mathcal{K} \subseteq \{1, \dots, K\}\), such that: \(G_{i} = \cup_{k \in \mathcal{K}} C_{k}\).
\vspace{-7pt}
\begin{lemma}
    The empirical risk of any visible cohort is upper-bounded by 
    %the worst fairness loss 
    \(\ell_{\mathrm{fair}}^{\mathrm{wst}} ( \widehat{\mathcal{D}}, \theta )\), meaning:
    %\begin{equation*}
        \(\textstyle \mathsf{L}(G_{t}) = \frac{1}{| G_{t} |} \sum_{(x_{i}, y_{i}) \in G_{t}} \ell_{\text{clss}} (y_{i}, f_{\theta}(x_{i})) \le \max_{k} R_{k}(\theta).\)
    %\end{equation*}
\end{lemma}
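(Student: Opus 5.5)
The plan is to write the visible-cohort risk $\mathsf{L}(G_t)$ as a convex combination of the cluster risks $R_k(\theta)$, and then bound that convex combination by its largest term.

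First, use the standing assumptions from the theoretical analysis. The clusters $C_1,\dots,C_K$ are pairwise disjoint, and $G_t=\bigcup_{k\in\mathcal{K}}C_k$ for some index set $\mathcal{K}\subseteq\{1,\dots,K\}$. Together these give $|G_t|=\sum_{k\in\mathcal{K}}|C_k|$. They also let the sum over $G_t$ split, with no double counting, into sums over the individual clusters:
\begin{equation*}
\sum_{(x_i,y_i)\in G_t}\ell_{\mathrm{clss}}(y_i,f_\theta(x_i))=\sum_{k\in\mathcal{K}}\sum_{(x_i,y_i)\in C_k}\ell_{\mathrm{clss}}(y_i,f_\theta(x_i))=\sum_{k\in\mathcal{K}}|C_k|\,R_k(\theta),
\end{equation*}
where the last equality is just the definition of $R_k(\theta)$.

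Dividing by $|G_t|$ gives $\mathsf{L}(G_t)=\sum_{k\in\mathcal{K}}w_k R_k(\theta)$ with weights $w_k=|C_k|/|G_t|$. These weights are nonnegative and sum to one, so
\begin{equation*}
\mathsf{L}(G_t)\le\max_{k\in\mathcal{K}}R_k(\theta)\le\max_{k\in\{1,\dots,K\}}R_k(\theta)=\ell_{\mathrm{fair}}^{\mathrm{wst}}(\widehat{\mathcal{D}},\theta).
\end{equation*}
The second inequality holds because $\mathcal{K}\subseteq\{1,\dots,K\}$, so enlarging the index set can only increase the maximum.

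The only delicate point is bookkeeping rather than a real obstacle. Every $R_k$ must be well defined, so I would restrict to clusters with $|C_k|>0$. Empty clusters contribute nothing to the union and can be dropped from $\mathcal{K}$, and $G_t$ is nonempty by assumption, so $\mathcal{K}$ is nonempty. No property of $\ell_{\mathrm{clss}}$, such as nonnegativity or convexity, is needed, because the argument uses only the averaging identity.
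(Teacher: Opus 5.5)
Your proof is correct and follows essentially the same route as the paper. Both use disjointness of the clusters to rewrite the cohort sum as $\sum_{k\in\mathcal{K}}|C_k|R_k(\theta)$, then bound this weighted average (weights $|C_k|/|G_t|$ summing to one) by $\max_k R_k(\theta)$; your remarks on empty clusters and on not needing nonnegativity of $\ell_{\mathrm{clss}}$ are accurate details the paper leaves implicit.
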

\vspace{-7pt}
\begin{proof}
    %From the assumption 
    Assuming \(G_{t} = \cup_{k \in \mathcal{K}} C_{k}\), \(\mathbb{I}((x_{i}, y_{i}) \in G_{t}) = \sum_{k \in \mathcal{K}} \mathbb{I}((x_{i}, y_{i}) \in C_{k})\), where \(\mathbb{I}(.)\) is the indicator function, 
    %The risk on a visible cohort \(G_{i}\) can be re-written as follows:
    we re-write the risk on \(G_{t}\) as:
    \begin{equation}
        \begin{aligned}[b]
            \mathsf{L}(G_{t}) & = \textstyle \frac{1}{| G_{t} |} \sum_{(x_{i}, y_{i})} \ell_{\text{clss}} (y_{i}, f_{\theta}(x_{i})) \mathbb{I}((x_{i}, y_{i}) \in G_{t}) \\
            & \textstyle = \frac{1}{| G_{t} |} \sum_{(x_{i}, y_{i})} \sum_{k \in \mathcal{K}} \ell_{\text{clss}} (y_{i}, f_{\theta}(x_{i})) \mathbb{I}((x_{i}, y_{i}) \in C_{k})\\
            & \textstyle = \frac{1}{| G_{t} |} \sum_{k \in \mathcal{K}} \sum_{(x_{i}, y_{i}) \in C_{k}} \ell_{\text{clss}} (y_{i}, f_{\theta}(x_{i})) \\
            & \textstyle = \frac{1}{| G_{t} |} \sum_{k \in \mathcal{K}} | C_{k} | R_{k}(\theta) 
            \le \sum_{j \in \mathcal{K}} \nicefrac{|C_{j} |}{|G_{t}|} \max_{k} R_{k}(\theta) \le \max_{k} R_{k}(\theta).
        \end{aligned}
    \end{equation}
\end{proof}\qed
\vspace{-15pt}

\subsection{HIDFairBench Benchmark}
\label{sec:benchmark}

% We benchmark our approach using a comprehensive experimental protocol designed to evaluate both predictive performance and fairness across hidden and visible cohorts. This subsection details the datasets used in our study, the evaluation metrics for accuracy and fairness, the range of FairAI strategies and learning paradigms considered, and the statistical testing procedures employed to assess the significance and robustness of the results.

%This section summarizes the datasets, metrics, FairAI methods, and statistical tests used to evaluate accuracy and fairness across hidden and visible cohorts.

%This section presents the benchmark’s datasets, metrics, and FairAI methods.

\subsubsection{Datasets.} 
% We evaluate our approach on three public medical imaging datasets: Fitzpatrick17K~\cite{groh2021evaluating}, HAM10000~\cite{tschandl2018ham10000}, and CMMD~\cite{cai2023online}.
% HAM10000 is a dermoscopic skin lesion dataset containing 9,948 images across seven diagnostic categories, which we follow prior work~\cite{maron2019systematic} in grouping into \emph{benign} and \emph{malignant} classes. We consider multiple visible cohort partitions, including \emph{Gender} (Female, Male), \emph{Age\_binary} ($\leq$60, $>$60), \emph{Age\_multi4} ($\leq$39, 40--59, 60--79, $\geq$80), as well as intersectional cohorts formed by \emph{Gender $\times$ Age\_binary} and \emph{Gender $\times$ Age\_multi4}.
% For Fitzpatrick17K, we binarize lesion labels by grouping \emph{non-neoplastic} and \emph{benign} cases as benign, and \emph{malignant} cases as malignant. Fitzpatrick skin type labels (types 0--5) are used as multi-group visible demographic attributes.
% CMMD consists of 5,152 mammography images annotated with diagnostic labels and patient age. We frame the task as \emph{non-cancer} versus \emph{cancer} classification and use a binarized age attribute ($\leq$55, $>$55) as the sensitive attribute.

The benchmark relies on three public medical imaging datasets: \textit{Fitzpatrick17K}~\cite{groh2021evaluating}, \textit{HAM10000}~\cite{tschandl2018ham10000}, and \textit{CMMD}~\cite{cai2023online}.
For HAM10000 (9,948 dermoscopic images), we follow prior work~\cite{maron2019systematic} and binarize diagnoses into benign vs. malignant. Visible cohort partitions include \emph{Gender}, \emph{Age\_binary} ($\leq$60, $>$60), \emph{Age\_multi4} ($\leq$39, 40--59, 60--79, $\geq$80), and intersectional \emph{Gender $\times$ Age\_binary} and \emph{Gender $\times$ Age\_multi4} groups.
For Fitzpatrick17K, we group lesions into benign (non-neoplastic + benign) vs. malignant, and use Fitzpatrick \emph{skin types} (0–5) as visible demographic attributes. For both HAM10000 and Fitzpatrick17K, we use an 80\%/10\%/10\% train/validation/test split.
For CMMD (5,152 mammography images), we frame classification as non‑cancer vs. cancer, use binarized \emph{patient age} ($\leq$55, $>$55)  as the sensitive attribute, and rely on a 70\%/10\%/20\% train/validation/test split. 

% \textcolor{red}{Do we follow the train/val/test partition from somewhere? We should mention this here.} 
% \textcolor{blue}{Yes. For HAM10000 and Fitzpatrick17K, we follow the official MedFair benchmark train/validation/test partition (80\%/10\%/10\%). For CMMD, we use a 70\%/10\%/20\% train/validation/test split.}
\vspace{1ex}
\noindent
\textbf{Evaluation \& Statistical Tests.}
% We employ distinct evaluation measures for hidden cohort discovery and fairness assessment. To evaluate the quality of discovered hidden cohorts, we adopt two metrics: the Brier Score (BS)~\cite{olesen2024slicing}, a proper scoring rule that quantifies calibration error between predicted confidence and binary outcomes and highlights under- and overperforming cohorts; and Average Purity (AP)~\cite{bissoto2025subgroup}, which measures cohort cohesion by assessing alignment between discovered cohorts and available demographic attributes such as gender, age, or skin type. For fairness evaluation, classification performance is measured using AUC, while disparity is quantified using four complementary metrics: AUC Gap and Worst-Case AUC~\cite{zong2022medfair}, Equity-Scaled AUC (ES-AUC)~\cite{luo2024fairvisionequitabledeeplearning}, and Performance-Scaled Disparity (PSD), which relates cohort-level performance to overall accuracy. Statistical significance is assessed following~\cite{zong2022medfair} using the Friedman test~\cite{friedman1937use}, with the Nemenyi post-hoc test~\cite{nemenyi1963distribution} applied when $p<0.05$; results are visualized using Critical Difference diagrams, where methods connected by a line are not significantly different. In addition, we perform paired one-sided $t$-tests across multiple runs to directly assess whether observed improvements between competing FairAI methods are statistically significant.
The hidden-cohort quality is measured using the \emph{Brier Score (BS)}~\cite{olesen2024slicing}, which captures calibration and identifies under- and overperforming cohorts, and \emph{Average Purity (AP)}~\cite{bissoto2025subgroup}, which quantifies alignment between discovered cohorts and visible demographic attributes. 
For visible cohort evaluation, we report \emph{AUC} and four fairness metrics: \emph{AUC Gap}, \emph{Worst-Case AUC}~\cite{zong2022medfair}, \emph{ES-AUC}~\cite{luo2024fairvisionequitabledeeplearning}, and \emph{Performance-Scaled Disparity (PSD)}.
Statistical significance uses the  \emph{Friedman test}~\cite{friedman1937use,zong2022medfair} with \emph{Nemenyi} post-hoc comparisons~\cite{nemenyi1963distribution} ($p<0.05$), visualized via Critical Difference (CD) diagrams. 
%Additionally, paired one-sided $t$-tests across runs assess significance between competing methods.

\vspace{1ex}
\noindent
\textbf{FairAI Methods \& Backbones.}
The benchmark tests many \textit{FairAI methods}, starting from the
\emph{ERM}~\cite{vapnik1999overview} baseline, which optimizes average classification error without addressing fairness.
\emph{SWAD}~\cite{cha2021swad} improves generalization by promoting convergence to flat minima.
Among fairness approaches, \emph{FIS}~\cite{luo2024fairvisionequitabledeeplearning} reweights group losses to equalize performance, while \emph{FEBS}~\cite{tian2024fairseg} emphasizes hard examples via error-bound scaling.
We also include \emph{FairCLIP-OT}, an optimal-transport fairness regularizer adapted from FairCLIP~\cite{luo2024fairclip}, and \emph{FaMI}~\cite{tian2024fairdomain}, which reduces dependence between representations and sensitive attributes via mutual information.
Furthermore, \emph{FairDi}~\cite{masroor2024fair} learns a shared fair backbone and distills knowledge from cohort‑specific experts into a unified student model.
For the \textit{backbone models}, we consider \textit{ResNet18}~\cite{he2016deep}, \textit{CLIP}~\cite{fang2023eva},  \textit{MedCLIP}~\cite{wang2022medclip}, and \textit{DINOv2}~\cite{oquab2023dinov2}.

\section{Experiments}
\label{sec:experiments}

%This section presents the HIDFairBench results and ablation studies.

\subsection{HIDFairBench Results}
\label{sec:HIDFairBench_results}

\begin{table}[t]
    \centering
\caption{AUC and BS across hidden clusters (Clt. ID) on HAM10000 (ERM, ResNet18), with cohort proportions (\%) of malignant/benign, male/female, and age $\le60$ or $>60$ years old.}
\label{tab:hidden_cohort_risk}

\scalebox{0.75}{
\begin{tabular}{c c @{\hspace{3ex}} c @{\hspace{1em}} c @{\hspace{1em}} c @{\hspace{1em}} c @{\hspace{1em}} c}
\toprule
\textbf{Clt.ID} & \textbf{AUC} & \textbf{BS} &  \textbf{(Mal)(Ben) } & \textbf{(Male)(Female) } & \textbf{($\le$60)($>$60) } \\
\midrule
0 & -- & 5.3E-5 & (0.00)(100.00) & (48.66)(51.34) & (78.61)(21.39) \\
1 & 0.6179 & 0.1506 & (18.55)(81.45) & (39.18)(60.82) & (70.10)(29.90) \\
2 & 0.8247 & 0.0241 & (2.63)(97.37) & (46.24)(53.76) & (74.44)(25.56) \\
3 & 0.7507 & 0.1759 & (27.27)(72.73) & (45.45)(54.55) & (54.54)(45.46) \\
4 & 0.6113 & 0.2255 & (36.36)(63.64) & (63.64)(36.36) & (35.06)(64.94) \\
5 & 0.7557 & 0.1090 & (13.65)(86.35) & (59.03)(40.97) & (56.38)(43.62) \\
6 & 0.7299 & 0.1956 & (66.39)(33.61) & (69.75)(30.25) & (23.53)(76.47) \\
\bottomrule
\end{tabular}
}
\end{table}

\subsubsection{Hidden-Cohort Quality.}
Using an ERM-trained ResNet18, BS on HAM10000 reveals a clear risk stratification across hidden cohorts (Table~\ref{tab:hidden_cohort_risk}): low malignancy-rate cohorts (Clusters~0,2) show low BS, intermediate  malignancy-rate cohorts (1,3,5) exhibit moderate BS, and high  malignancy-rate cohorts (4,6) attain the highest BS, indicating reduced reliability on clinically severe cases. These trends mirror underlying population structure, with cohorts dominated by older patients showing higher malignancy prevalence, consistent with known age-related risk patterns in dermatology.
Across CMMD and Fitzpatrick17K, BS similarly tracks malignancy severity, with low malignancy-rate cohorts obtaining low calibration error and high malignancy-rate  clusters showing higher BS. Despite this clear risk stratification, hidden cohorts show weak alignment with available demographic metadata: on HAM10000, AP is $0.4919 {\pm} 0.0183$ (gender) and $0.2234 {\pm} 0.0065$ (age); on Fitzpatrick17K, AP for skin type is $0.0998 {\pm} 0.0161$; and on CMMD, AP for age is $0.3517{\pm}0.0067$.  Such APs $<0.5$ confirm that hidden cohorts primarily reflect visual structure rather than demographic attributes. 

\noindent
\textbf{Visible Cohort Evaluation.}
% Table~\ref{tab:visible_hidden_cohort_fairness_results} compares visible-demographics supervised (\emph{Classic}) and hidden-cohort-based (\emph{LHCF}) training in terms of accuracy and fairness measures across all datasets and demographic attributes. LHCF consistently improves both accuracy and fairness results relative to the visible-demographics supervised learning strategy, with gains becoming increasingly pronounced for complex and intersectional cohorts (e.g., \emph{Gender $\times$ Age\_binary}, \emph{Gender $\times$ Age\_multi4}).
% Improvements are smaller for simple binary attributes but remain consistent across methods. Notably, these gains are achieved \emph{without} using demographic metadata during training, highlighting the relevance of hidden-cohort learning in addressing AI fairness. 
Table~\ref{tab:visible_hidden_cohort_fairness_results} compares visible‑demographic supervised (\emph{Classic}) and \emph{LHCF} training across all datasets and demographic attributes. 
%LHCF consistently improves AUC and fairness, with strong gains for complex and intersectional cohorts (see Fig.~\ref{fig:motivation_visible_complexity}).  These improvements are achieved \emph{without} demographic metadata, and despite weak alignment between hidden cohorts and demographic groups (AP$<0.5$), fairness learned from appearance‑based latent cohorts generalizes reliably to visible attributes at test time. Across FairAI methods, LHCF generally outperforms classic demographic‑supervised training on visible cohorts, with FairDi+LHCF delivering the strongest overall results.
LHCF improves AUC and fairness, especially for intersectional cohorts, without using any demographic metadata (see Fig.~\ref{fig:motivation_visible_complexity}). Despite weak alignment between hidden and demographic groups (AP$<0.5$), the fairness learned from appearance‑based cohorts generalizes well to visible attributes, with FairDi+LHCF delivering the strongest overall results.
The CD diagrams in Fig.~\ref{fig:cd_diagrams_visible_cohorts}  confirm the superiority of LHCF, and in particular the FairDi+LHCF as the top‑ranked approach across nearly all evaluation metrics.
%\textcolor{red}{An important question is why LHCF, trained solely on hidden cohorts, can outperform classic methods trained and tested directly on visible cohorts. We argue that hidden cohorts provide a stronger and more informative training signal, as  illustrated in Fig.~\ref{fig:improvement_wrt_erm}, where FairDi achieves substantially larger improvements (for Overall AUC, Min. AUC, ES-AUC, AUC Gap) over ERM when trained and tested on the hidden cohorts than on the visible cohorts (Gender) of HAM10000.}
An important question is why LHCF, trained on hidden cohorts, outperforms classical methods trained on visible cohorts. The intuition, based on Lemma 1, is that instead of minimizing the average loss over overlapping visible cohorts, which can overemphasize intersectional groups, LHCF minimizes the loss over hidden cohorts, thereby reducing extreme disparities and producing more balanced performance.

%\cellcolor{myred}{Update the table results with methods trained with classic and hidden cohort approaches.} \textcolor{blue}{I updated Table~2 accordingly. Note that the results for ERM remain identical in both the Classic and LHCF settings. This is expected because ERM does not incorporate any fairness regularization; it is trained solely using the standard BCE loss. Therefore, changing the training paradigm (visible-supervised vs. hidden-cohort learning) does not affect its objective, leading to identical performance. Additionally, we observe that FaMI performs better in the Classic setting compared to LHCF. This suggests that FaMI is more effective when fairness regularization is directly aligned with the visible demographic attributes. If you think it is better to remove it, we can do that. However, I believe we should keep it. Because we are introducing a new benchmark.}

\begin{table*}[t]
\centering
\caption{Classic vs. LHCF training results averaged over HAM10000 (\emph{Gender}, \emph{Age\_binary}, \emph{Age\_multi4}, \emph{Gender $\times$ Age\_binary}, \emph{Gender $\times$ Age\_multi4}), Fitzpatrick17K (\emph{Skin Type}), and CMMD (\emph{Age\_binary}). Cell colors denote \colorbox{mygreen}{Better}/\colorbox{myred}{Worse}/\colorbox{myyellow}{Same} between LHCF and Classic, and \textbf{Bold}=best result.}
\label{tab:visible_hidden_cohort_fairness_results}

\scalebox{0.85}{
\fontsize{8}{9}\selectfont
\begin{tabular}{c l c c c c c c c}
\toprule
& \textbf{Measures and Ranks} 
& \textbf{ERM} 
& \textbf{SWAD} 
& \textbf{FIS} 
& \textbf{FaMI} 
& \textbf{FEBS} 
& \textbf{FairCLIP-OT} 
& \textbf{FairDi} \\
\midrule

% ---------- VISIBLE-SUPERVISED ----------
\multirow{6}{*}{\rotatebox{90}{\textbf{Classic}}}
& Avg. Overall AUC Score$\uparrow$ 
& \cellcolor{myyellow}{0.8775} 
& \cellcolor{myred}{0.8867} 
& \cellcolor{mygreen}{0.8806} 
& \cellcolor{myred}{0.8582} 
& \cellcolor{myred}{0.8731} 
& \cellcolor{myred}{0.8692} 
& \cellcolor{myred}{0.9014} \\

& Avg. Min. AUC Score$\uparrow$ 
& \cellcolor{myyellow}{0.8213} 
& \cellcolor{myred}{0.8340} 
& \cellcolor{myred}{0.8212} 
& \cellcolor{mygreen}{0.8113} 
& \cellcolor{myred}{0.8246} 
& \cellcolor{myred}{0.8083} 
& \cellcolor{myred}{0.8511} \\

& Avg. ES-AUC Score$\uparrow$ 
& \cellcolor{myyellow}{0.8447} 
& \cellcolor{myred}{0.8578} 
& \cellcolor{myred}{0.8477} 
& \cellcolor{myred}{0.8304} 
& \cellcolor{myred}{0.8444} 
& \cellcolor{myred}{0.8374} 
& \cellcolor{myred}{0.8744} \\

& Avg. Gap AUC Score$\downarrow$ 
& \cellcolor{myyellow}{0.1053} 
& \cellcolor{myred}{0.0905} 
& \cellcolor{myred}{0.1019} 
& \cellcolor{mygreen}{0.0953} 
& \cellcolor{myred}{0.0940} 
& \cellcolor{myred}{0.1012} 
& \cellcolor{myred}{0.0824} \\

& Avg. Mean PSD Score$\downarrow$ 
& \cellcolor{myyellow}{0.0478} 
& \cellcolor{myred}{0.0406} 
& \cellcolor{myred}{0.0487} 
& \cellcolor{mygreen}{0.0448} 
& \cellcolor{mygreen}{0.0444} 
& \cellcolor{myred}{0.0473} 
& \cellcolor{myred}{0.0354} \\

& Avg. Max PSD Score$\downarrow$ 
& \cellcolor{myyellow}{0.1220} 
& \cellcolor{myred}{0.1042} 
& \cellcolor{myred}{0.1186} 
& \cellcolor{mygreen}{0.1134} 
& \cellcolor{myred}{0.1104} 
& \cellcolor{myred}{0.1192} 
& \cellcolor{myred}{0.0915} \\

\midrule

% ---------- HIDDEN-COHORT ----------
\multirow{6}{*}{\rotatebox{90}{\textbf{LHCF}}}
& Avg. Overall AUC Score$\uparrow$ & \cellcolor{myyellow}{0.8775} & \cellcolor{mygreen}{0.8889} & \cellcolor{myred}{0.8804} & \cellcolor{mygreen}{0.8668} & \cellcolor{mygreen}{0.8827} & \cellcolor{mygreen}{0.8703} & \cellcolor{mygreen}{\textbf{0.9050}} \\
& Avg. Min. AUC Score$\uparrow$ & \cellcolor{myyellow}{0.8213} & \cellcolor{mygreen}{0.8418} & \cellcolor{mygreen}{0.8266} & \cellcolor{myred}{0.8106} & \cellcolor{mygreen}{0.8369} & \cellcolor{mygreen}{0.8127} & \cellcolor{mygreen}{\textbf{0.8686}} \\
& Avg. ES-AUC Score$\uparrow$ & \cellcolor{myyellow}{0.8447} & \cellcolor{mygreen}{0.8617} & \cellcolor{mygreen}{0.8486} & \cellcolor{mygreen}{0.8339} & \cellcolor{mygreen}{0.8519} & \cellcolor{mygreen}{0.8399} & \cellcolor{mygreen}{\textbf{0.8817}} \\
& Avg. Gap AUC Score$\downarrow$ & \cellcolor{myyellow}{0.1053} & \cellcolor{mygreen}{0.0843} & \cellcolor{mygreen}{0.0967} & \cellcolor{myred}{0.1061} & \cellcolor{mygreen}{0.0929} & \cellcolor{mygreen}{0.0963} & \cellcolor{mygreen}{\textbf{0.0666}} \\
& Avg. Mean PSD Score$\downarrow$ & \cellcolor{myyellow}{0.0478} & \cellcolor{mygreen}{0.0386} & \cellcolor{mygreen}{0.0461} & \cellcolor{myred}{0.0498} & \cellcolor{myred}{0.0450} & \cellcolor{mygreen}{0.0451} & \cellcolor{mygreen}{\textbf{0.0312}} \\
& Avg. Max PSD Score$\downarrow$ & \cellcolor{myyellow}{0.1220} & \cellcolor{mygreen}{0.0976} & \cellcolor{mygreen}{0.1124} & \cellcolor{myred}{0.1245} & \cellcolor{mygreen}{0.1080} & \cellcolor{mygreen}{0.1132} & \cellcolor{mygreen}{\textbf{0.0762}} \\
\bottomrule

\end{tabular}
}
\end{table*}

\begin{figure*}[t]
    \centering

    % ---------------- Row 1 ----------------
    \begin{subfigure}[t]{0.32\textwidth}
        \centering
        \includegraphics[width=\linewidth]{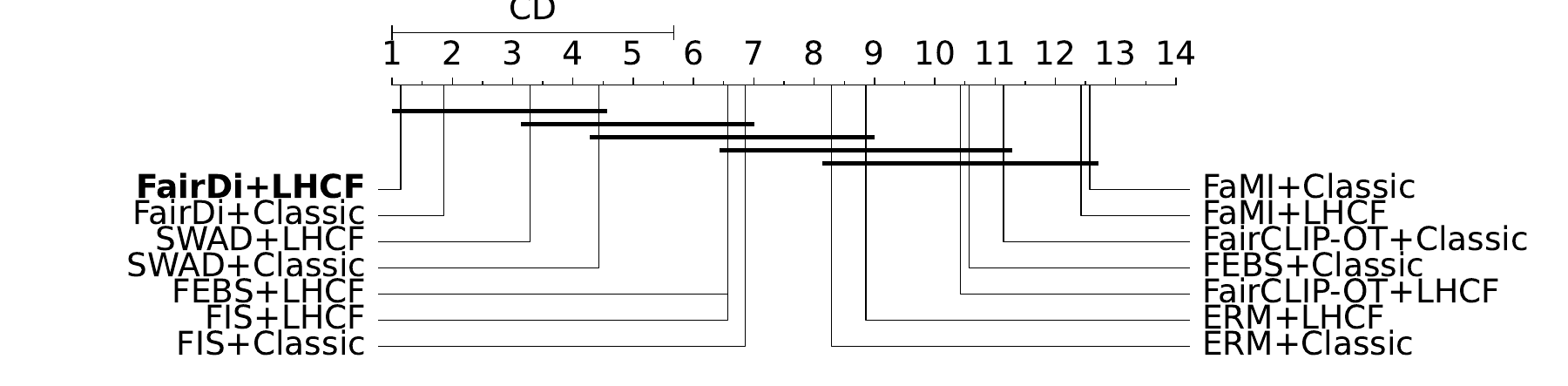}
        \caption{Overall AUC}
        \label{fig:cd_overall_auc_visible}
    \end{subfigure}
    \hfill
    \begin{subfigure}[t]{0.32\textwidth}
        \centering
        \includegraphics[width=\linewidth]{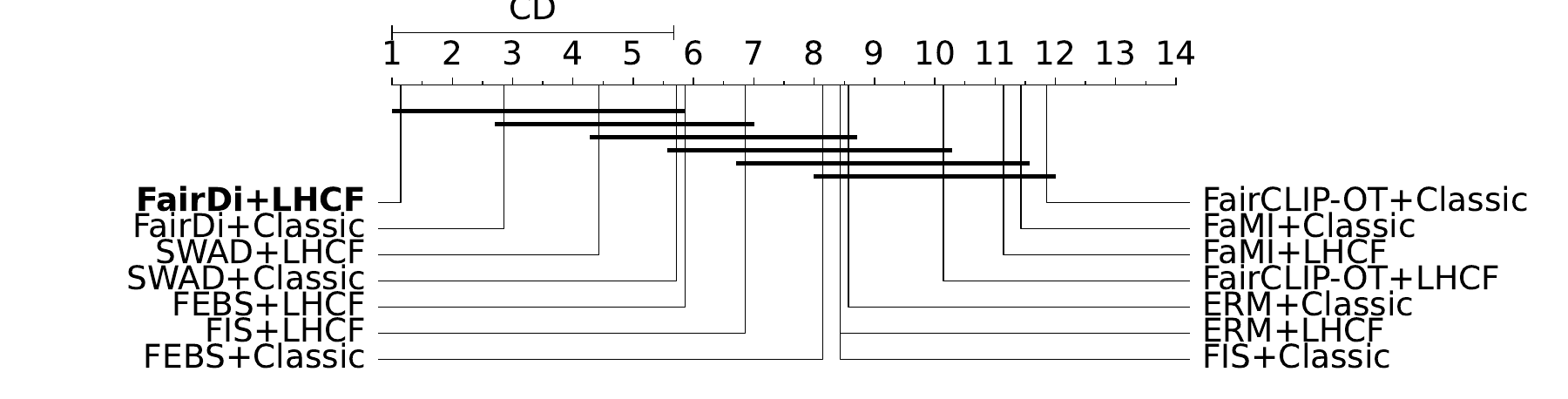}
        \caption{Min. AUC}
        \label{fig:cd_worst_auc_visible}
    \end{subfigure}
    \hfill
    \begin{subfigure}[t]{0.32\textwidth}
        \centering
        \includegraphics[width=\linewidth]{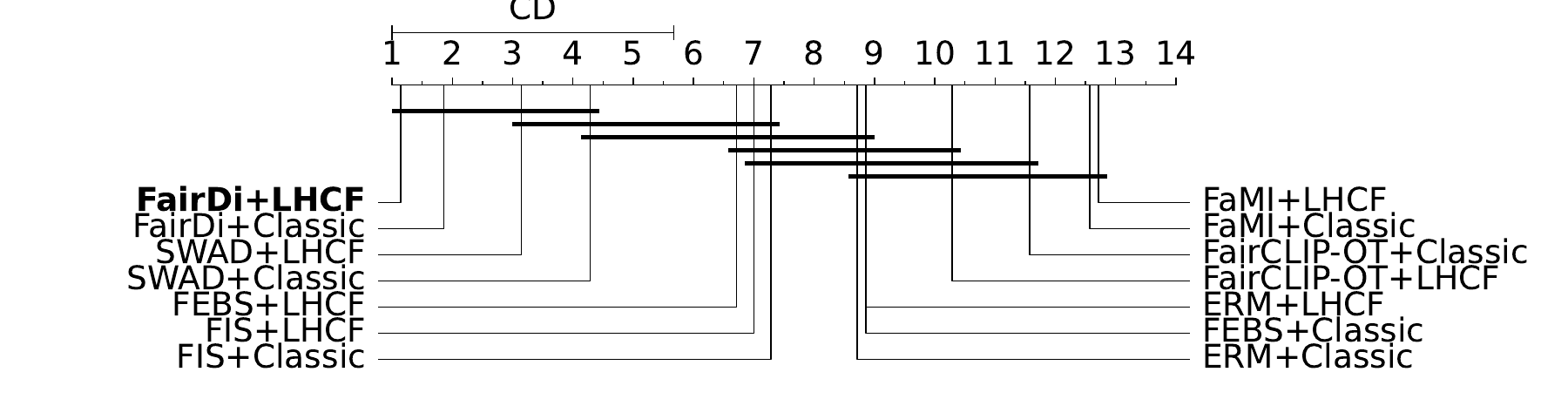}
        \caption{ES-AUC}
        \label{fig:cd_es_auc_visible}
    \end{subfigure}

    \vspace{2mm}

    % ---------------- Row 2 ----------------
    \begin{subfigure}[t]{0.32\textwidth}
        \centering
        \includegraphics[width=\linewidth]{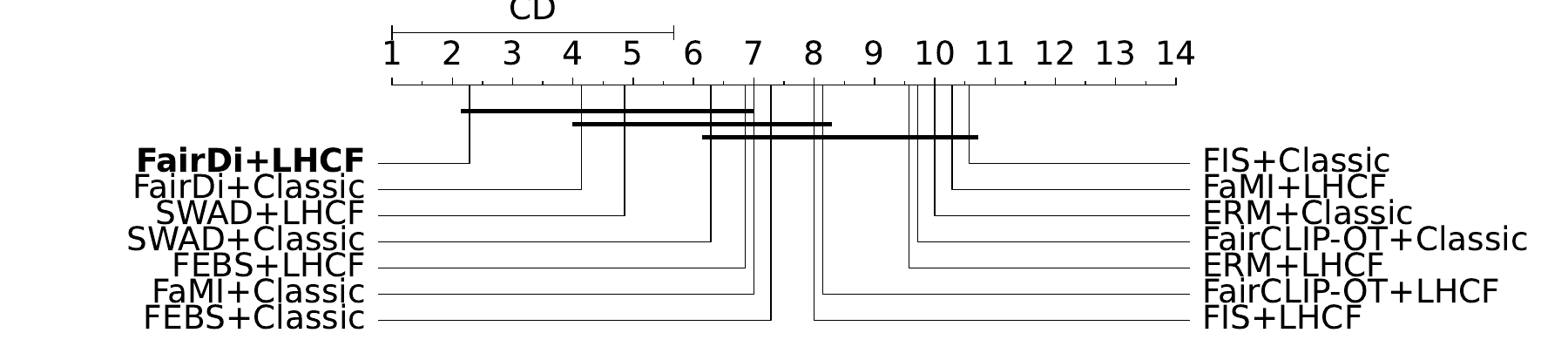}
        \caption{Gap AUC}
        \label{fig:cd_gap_auc_visible}
    \end{subfigure}
    \hfill
    \begin{subfigure}[t]{0.32\textwidth}
        \centering
        \includegraphics[width=\linewidth]{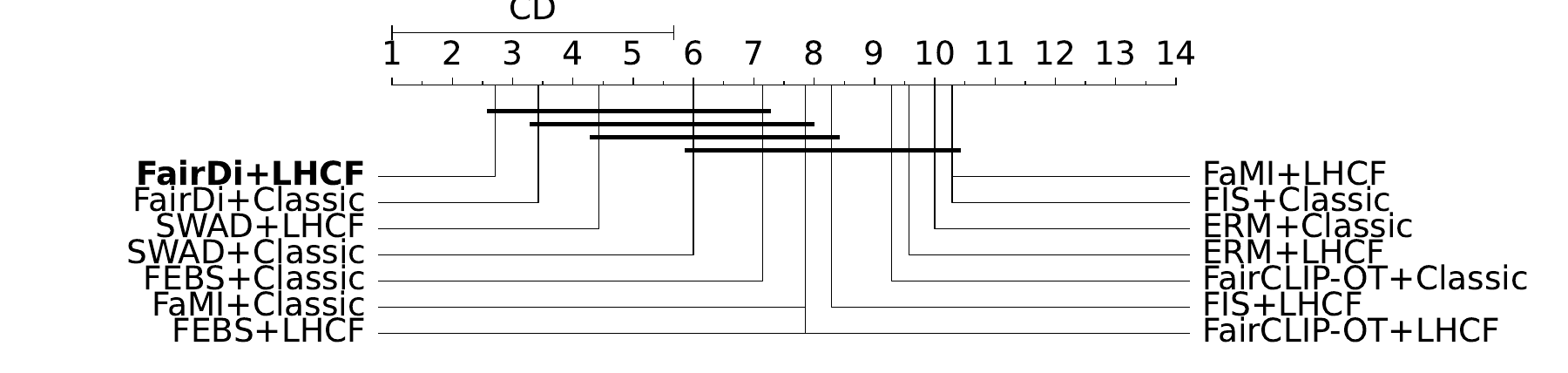}
        \caption{Mean PSD}
        \label{fig:cd_mean_psd_visible}
    \end{subfigure}
    \hfill
    \begin{subfigure}[t]{0.32\textwidth}
        \centering
        \includegraphics[width=\linewidth]{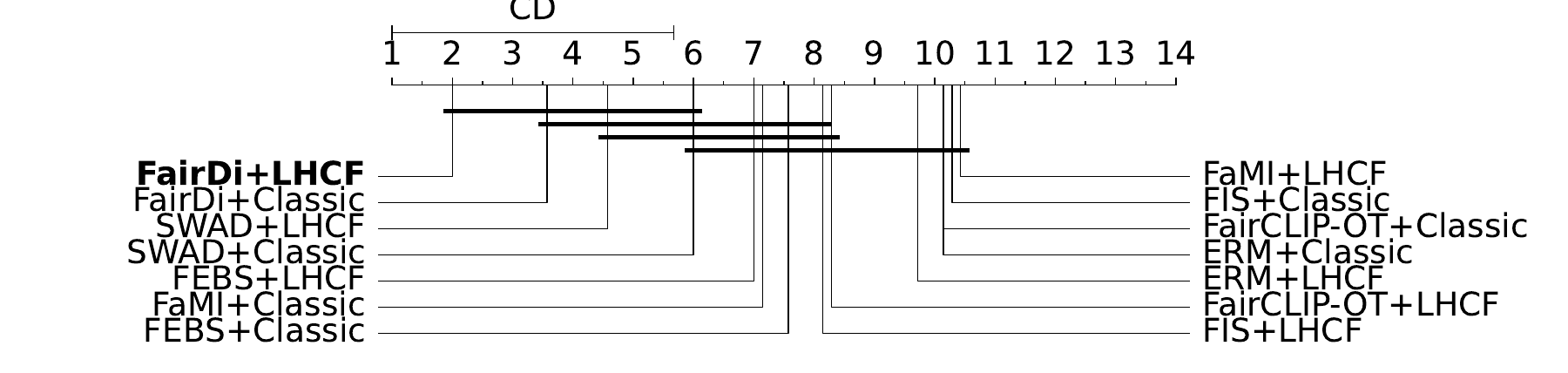}
        \caption{Max PSD}
        \label{fig:cd_max_psd_visible}
    \end{subfigure}

    \caption{
    CD diagrams comparing Classic and LHCF learning across all datasets and demographic cohorts (Please zoom in to view the figures clearly).
    }
    \label{fig:cd_diagrams_visible_cohorts}
\end{figure*}

\subsection{Ablation Studies}

\noindent
\textbf{Number of Hidden Cohorts ($K$).}
Fig.~\ref{fig:k_ablation_ham10000} evaluates the performance of LHCF as a function of the number of cohorts.
The BIC choice (i.e., $K{=}7$) produces the best results for AUC and fairness measures for the majority of visible cohort partitions, whereas larger or smaller values of $K\in\{3,5,9\}$ tend to degrade performance. 
This supports BIC as an effective criterion for selecting $K$.

\begin{figure*}[t]
    \centering

    % % -------- Row 1 (a) Hidden Cohorts --------
    % \begin{subfigure}[t]{0.9\textwidth}
    %     \centering
    %     \includegraphics[width=\linewidth]{Figures/Fairness vs. Number of Hidden Clusters/hidden-cohorts/FairDi_FairDi_K_Ablation.pdf}
    %     \caption{}
    %     \label{fig:k_ablation_hidden}
    % \end{subfigure}

    % \vspace{3mm}

    % -------- Row 2 (b) Visible Cohort Complexity --------
    % \begin{subfigure}[t]{0.9\textwidth}
    %     \centering
        \includegraphics[width=0.9\linewidth]{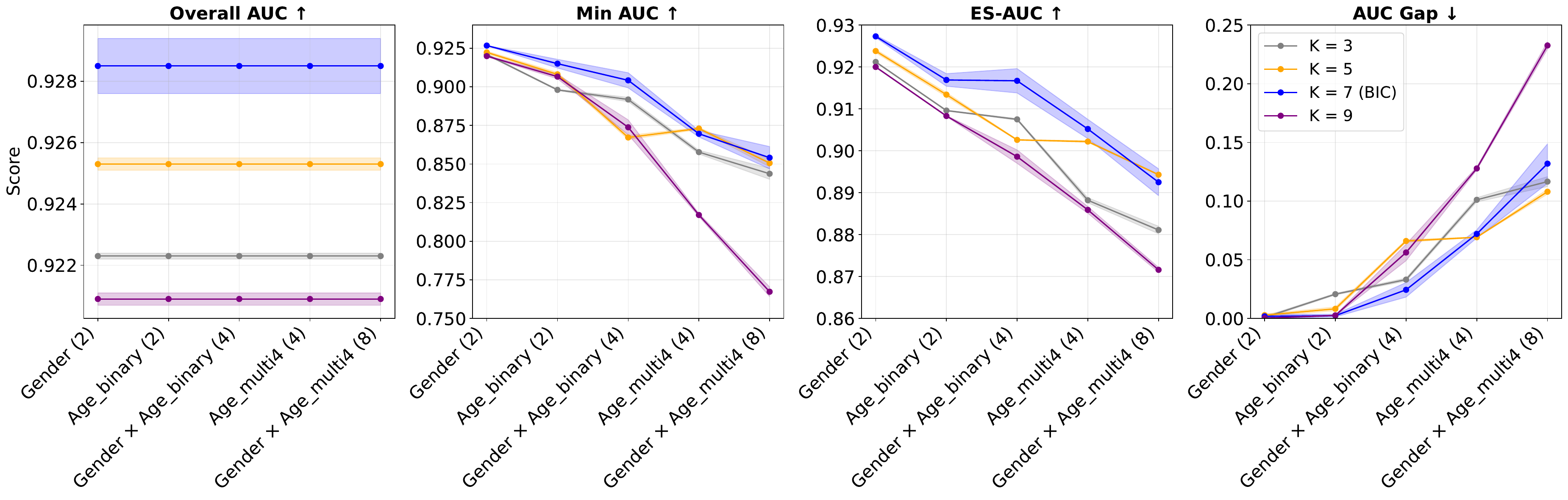}
    %     \caption{}
    %     \label{fig:k_ablation_visible}
    % \end{subfigure}

    \caption{
    Sensitivity analysis of LHCF (FairDi+ResNet18 backbone) with respect to the number of hidden cohorts ($K$) on the HAM10000 dataset. 
    %The analysis is based on the performance measured on visible demographic cohorts of varying complexity (\emph{Gender}, \emph{Age\_binary}, \emph{Age\_multi4}, \emph{Gender $\times$ Age\_binary}, \emph{Gender $\times$ Age\_multi4}).
    }
    \label{fig:k_ablation_ham10000}
\end{figure*}

\vspace{1ex}
\noindent
\textbf{Embedding Backbone.}
%Fig.~\ref{fig:backbone_ablation_ham10000} shows how different image representation backbones (ResNet18, CLIP, MedCLIP, DINOv2) affect the generalization of hidden-cohort fairness to visible demographic groups. ResNet18 and MedCLIP show the highest Overall AUC results. In contrast, fairness metrics show clearer dependence on the induced latent cohort structure: ResNet18 and MedCLIP generally achieve higher Min.~AUC and ES-AUC and maintain lower AUC Gap as cohort complexity increases, whereas CLIP and DINOv2 exhibit larger drops in worst-case performance and higher disparity for intersectional groups. These trends suggest that fairness robustness depends on how the backbone shapes hidden-cohort granularity, with moderate clustering complexity yielding more stable transfer of fairness to visible demographic cohorts.
%Fig.~\ref{fig:backbone_ablation_ham10000} summarizes how representation backbones affect LCHF's performance. ResNet18 and MedCLIP achieve the best Overall AUC, while fairness metrics show clearer separation: for Min.~AUC and AUC Gap, ResNet18 performs best on the simpler intersectional settings; for ES-AUC, ResNet18 is consistently better across all cohort complexities, with MedCLIP a solid second. CLIP and DINOv2 show larger drops in worst-case performance and higher disparity as cohort complexity increases. Overall, backbones inducing moderate hidden-cohort granularity, such as ResNet18 and MedCLIP with $K=7$ and $K=6$ clusters found by BIC, provide better accuracy and fairness results than CLIP and DINOv2 with $K=5$ and $K=10$ clusters found by BIC.
Fig.~\ref{fig:backbone_ablation_ham10000} summarizes how representation backbones affect LHCF. ResNet18 and MedCLIP achieve the best Overall AUC. Fairness metrics separate more clearly: for Min.~AUC and AUC Gap, ResNet18 performs best on simpler intersectional settings; for ES-AUC, ResNet18 is consistently strongest across all cohort complexities, with MedCLIP a solid second. CLIP and DINOv2 show larger drops in worst-case performance and higher disparity as cohort complexity increases. Overall, backbones inducing \emph{moderate} hidden-cohort granularity, e.g., ResNet18 ($K{=}7$) and MedCLIP ($K{=}6$) selected by BIC, yield better accuracy and fairness than CLIP ($K{=}10$) and DINOv2 ($K{=}5$).

\begin{figure*}[t]
    \centering

    % % -------- Row 1 (a) Hidden Cohorts --------
    % \begin{subfigure}[t]{0.9\textwidth}
    %     \centering
         %\includegraphics[width=\linewidth]{Figures/ImageNet Pre-training vs. Medical Foundation Models/hidden-cohorts/FairDi_Backbone_Comparison.pdf}
    %     \caption{}
    %     \label{fig:backbone_ablation_hidden}
    % \end{subfigure}

    % \vspace{3mm}

    % -------- Row 2 (b) Visible Cohort Complexity --------
    % \begin{subfigure}[t]{0.9\textwidth}
    %     \centering
        \includegraphics[width=0.9\linewidth]{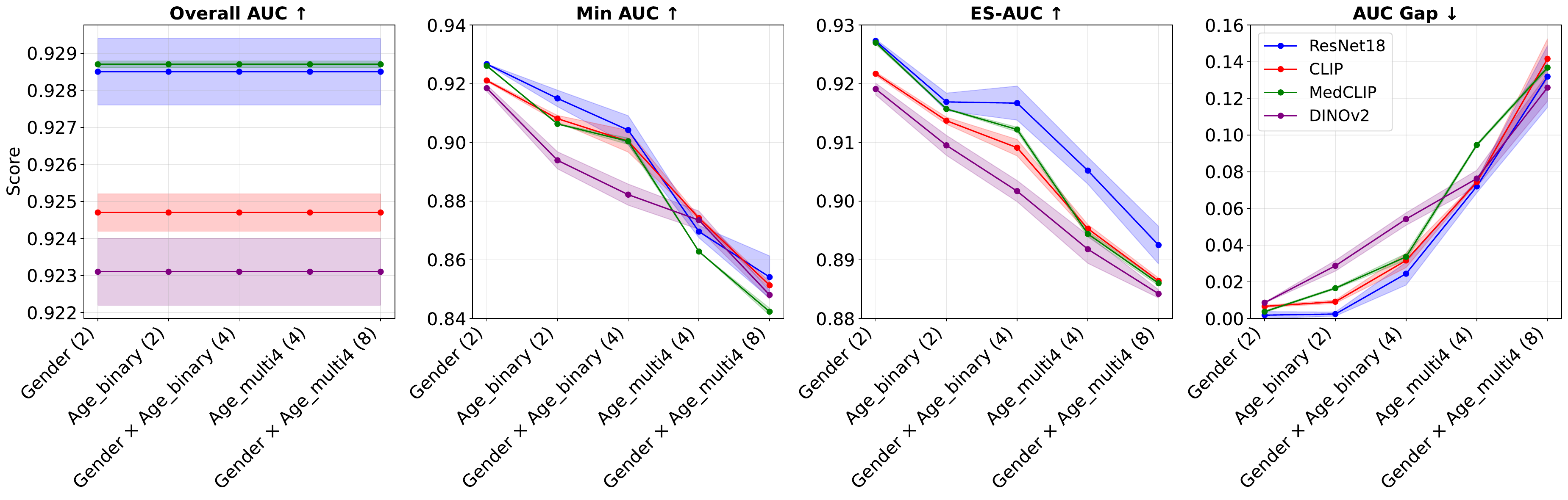}
    %     \caption{}
    %     \label{fig:backbone_ablation_visible}
    % \end{subfigure}

    \caption{
    Sensitivity analysis of LHCF (FairDi) with respect to the  backbone (ResNet18, CLIP~\cite{fang2023eva}, MedCLIP~\cite{wang2022medclip}, and DINOv2~\cite{oquab2023dinov2}) on  HAM10000.
    }
    \label{fig:backbone_ablation_ham10000}
\end{figure*}

\vspace{1ex}
\noindent
\textbf{Demographic-Aware Clustering (DAC) vs. LHCF.}
As an intermediate alternative between \textit{Classic} and \textit{LHCF}, DAC clusters samples using embeddings augmented with demographic attributes and then optimizes fairness over these clusters. Fig.~\ref{fig:dac_vs_lhcf_visible_complexity} shows that LHCF outperforms DAC on nearly all measures and across all cohort complexities. These results indicate that appearance driven, label-free fairness optimization offers greater robustness to visible-cohort complexity than DAC while eliminating any dependence on demographic labels.

\begin{figure*}[t]
    \centering
    \includegraphics[width=0.9\textwidth]{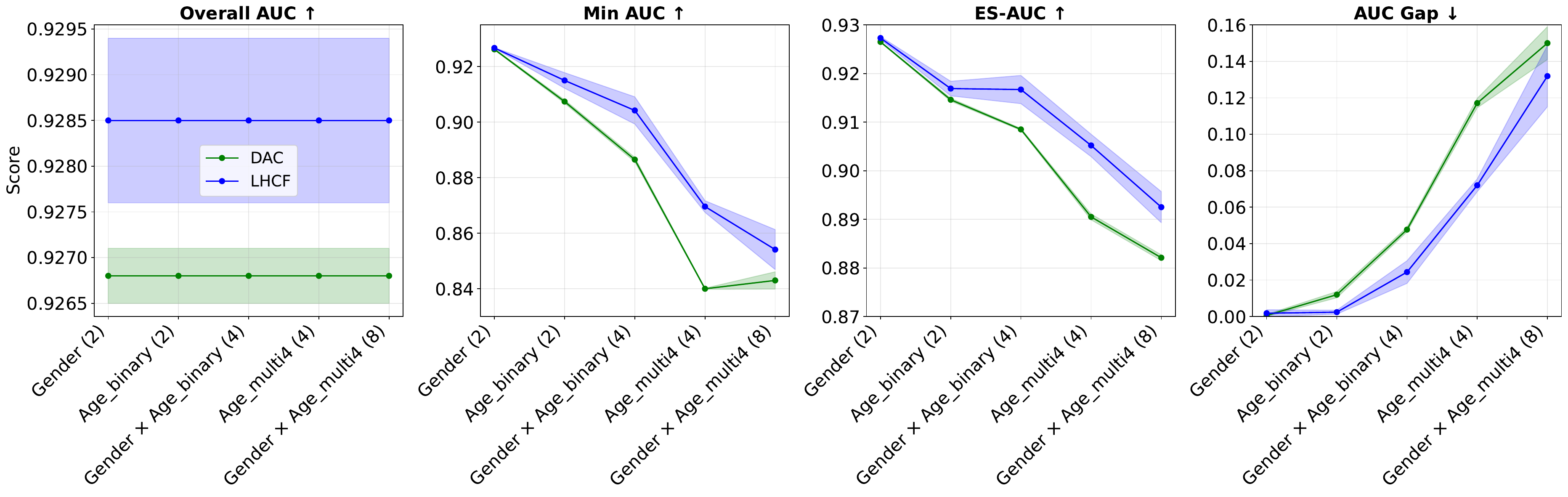}
    \caption{DAC vs. LHCF (FairDi+ResNet18 backbone) on HAM10000 dataset.}
    \label{fig:dac_vs_lhcf_visible_complexity}
\end{figure*}

\section{Conclusion}
\label{sec:con}

\vspace{-5pt}

We introduced LHCF, which optimizes fairness over appearance-based latent cohorts without demographic supervision, and proposed \emph{HIDFairBench} to evaluate fairness under single and multiple demographic attributes. Across three datasets, hidden cohorts exhibit clinical risk stratification (BS rises with malignancy) but weak demographic alignment (AP$<0.5$), indicating they capture meaningful visual structure. Relative to the demographic‑supervised Classic FairAI methods, LHCF improves fairness and accuracy on visible groups without using demographic labels, with solid gains for intersectional cohorts. LHCF is robust to design choices: a BIC-selected moderate number of cohorts ($K{=}7$) yields the best accuracy-fairness tradeoff, and backbones with similar cohort granularity (ResNet18, MedCLIP) provide the best performance. Compared with DAC, LHCF delivers better accuracy and fairness.
In future work, we will improve the reproducibility of LHCF by stabilizing the unsupervised clustering stage and extend Lemma 1 to accommodate more fairness losses; nonetheless, the adoption of data-driven latent stratification offers a more generalizable and deployable pathway toward equitable and trustworthy medical imaging analysis.

%\textcolor{red}{These gains stem from LHCF’s focus on appearance‑based latent cohorts, which expose the visual error modes that drive disparities and provide a stronger training signal than demographic groups (Fig.~\ref{fig:improvement_wrt_erm}). As a result, fairness improvements transfer reliably to visible demographic cohorts even when hidden–visible alignment is limited.} 
%A limitation of LHCF is the reduced reproducibility from the unsupervised clustering stage; nonetheless, the adoption of data-driven latent stratification offers a more generalizable and deployable pathway toward equitable and trustworthy medical imaging analysis.

%A key limitation of our approach lies in the reproducibility of exact results due to the unsupervised clustering stage, as variations in latent cohort discovery can influence downstream fairness outcomes; nevertheless, our findings suggest that embracing data-driven latent stratification offers a more generalizable and practically deployable pathway toward equitable medical imaging systems.

% \bigskip

% \noindent\textbf{Disclosure of Interests.} The authors have no competing interests to declare that are relevant to the content of this article.

%
%
%
%
% ---- Bibliography ----
%
% BibTeX users should specify bibliography style 'splncs04'.
% References will then be sorted and formatted in the correct style.

% \bibliographystyle{splncs04}
% \bibliography{Sections/main}
\printbibliography

\end{document}